\newtheorem{theorem}{Theorem}[]
\newtheorem{proposition}[theorem]{Proposition}
\newtheorem{definition}{Definition}[]
\newcommand{\cduel}{{\sc\textsf{CorrDuel}}}
\newcommand{\cupdate}{{\sc\textsf{CorrUpdate}}}
\title{Correlational Dueling Bandits with Application to Clinical Treatment in Large Decision Spaces}
\author{Yanan Sui, Yisong Yue, Joel W. Burdick \\
  California Institute of Technology \\
  Pasadena, CA 91125 \\
  \{ysui,yyue\}@caltech.edu, jwb@robotics.caltech.edu\\
}
\begin{document}

\maketitle

\begin{abstract}
We consider sequential decision making under uncertainty, where the goal is to optimize over a large decision space using noisy comparative feedback. This problem can be formulated as a $K$-armed Dueling Bandits problem where $K$ is the total number of decisions. When $K$ is very large, existing dueling bandits algorithms suffer huge cumulative regret before converging on the optimal arm. This paper studies the dueling bandits problem with a large number of arms that exhibit a low-dimensional correlation structure. Our problem is motivated by a clinical decision making process in large decision space. We propose an efficient algorithm \cduel which optimizes the exploration/exploitation tradeoff in this large decision space of clinical treatments. 
More broadly, our approach can be applied to other sequential decision problems with large and structured decision spaces.
We derive regret bounds, and evaluate performance in simulation experiments as well as on a live clinical trial of therapeutic spinal cord stimulation.
To our knowledge, this marks the first time an online learning algorithm was applied towards spinal cord injury treatments.  Our experimental results show the effectiveness and efficiency of our approach.
\end{abstract}

\section{Introduction}

In many online learning settings, particularly those that involve human feedback, reliable feedback is often limited to pairwise preferences instead of real valued feedback. Examples include implicit or subjective feedback for information retrieval and recommender systems, such as clicks on search results, and subjective feedback on the quality of recommended care \cite{chapelle2012large,sui2014clinical}.  This setup motivates the dueling bandits problem \cite{yue2009interactively}, which formalizes the problem of online regret minimization via preference feedback (e.g., choosing a pair of arms to be compared at each time step). Many dueling bandits algorithms \cite{yue2009interactively,yue2011beat,zoghi2014relative,ailon2014reducing,komiyama2015regret,wu2016double} have been developed for efficiently computing this problem with independent arms. However, these algorithms are not efficient in situations involving a large number of dependent arms. Specifically, when the time horizon $T$ is smaller than the number of arms $K$, it is hopeless to achieve low regret without leveraging structure among arms. 

Our problem is motivated by clinical research for recovering motor function after severe spinal cord injury. Previous research \cite{harkema2011effect} has shown that electrical stimulation applied to the spinal cord via electrode arrays implanted in the epidural space over the lumbosacral area enables paralyzed patients to achieve full weight-bearing standing, improvements in stepping, and partial recovery of lost autonomic functions. Stimulation consists of electrical pulse trains applied to selected electrodes. The challenge is that the optimal stimulus pattern (the choice of active electrodes and their polarities, the pulse amplitude and width, and the pulse train frequency) varies significantly across patients. And even for the same patient, the response to the same stimulus has some variation across trials. Hence, clinicians must determine the optimal stimulus for each patient under noisy conditions, which currently a laborious and ad-hoc approach. 

\begin{figure}[b!]
\centering
\includegraphics[width=0.48\textwidth]{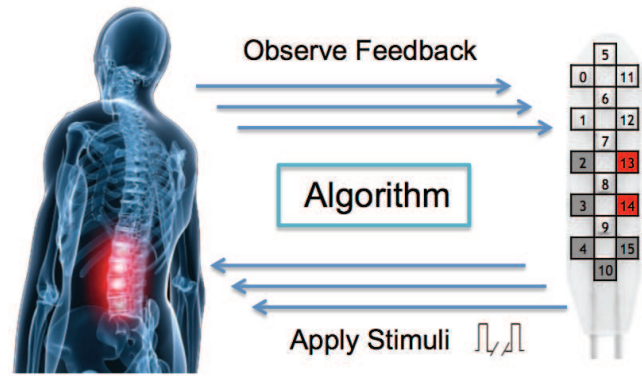}
\caption{The Standing Experiment under spinal stimulation.} 
\label{fig:exp}
\end{figure}

Figure \ref{fig:exp} shows the clinical treatment procedure for {\em stand-training} of paraplegics. During a treatment/optimization session, new stimuli are recommended by the algorithm to be applied to the electrode implanted in the patient. The patient then attempts to stand using the given stimuli, and the observing clinicians compare the patient's standing performance.  The total number of different stimulating configurations is  $\sim 4.3\times 10^7$ due to the complexity of electrodes, and so it is not feasible to search through the whole space. The goal is to develop a algorithm that can automatically select stimuli in order to quickly converge to good treatments.

Motivated by this application, we consider the problem of finding optimal stimuli based on the general setting of the multi-armed bandit problem. The classical bandit problem trades off between exploration and exploitation among a number of different arms, each having a quantifiable but stochastic reward with an initially unknown distribution. In contrast, for our clinical problem, the patient's motor response to stimulation is hard to quantify. Neither video motion capture nor electromyographic (EMG) recordings of muscle activity can yet provide a consistent and satisfactory measure of motor skill under stimulation. One reasonably reliable measure is that of pairwise comparisons, e.g., whether one stimulus more effective than another. 
While the patient's performance under a specific stimulus is hard to quantify in the clinical setting, we can obtain comparisons of stimuli which are tested within the short time period of one training session.


\paragraph{Our Contributions.}
In this paper, we show how to cast the problem of online learning of personalized clinical treatment as a dueling bandits problem with a correlated action space, which we call {\em correlational dueling bandits} .  We present an algorithm which meets the demands of such clinical settings, and can effectively model such correlation dependencies to achieve good performance.
Our algorithm takes advantage of the correlations among different arms to update the whole active set of arms instead of only updating the two dueling arms. This approach achieves fast convergence to the (near) optimal decisions regardless of the large decision space.
We deployed \cduel as the first algorithmic approach to the control of spinal cord stimulation in clinical experiments. We find that \cduel can identify a group of optimal stimuli and help paraplegic human patients to achieve full-weight standing.

\section{Related Work}
\subsection{Multi-Armed Bandits}
The stochastic multi-armed bandits problem \cite{robbins52} refers to an iterative decision making problem in which one repeatedly chooses among K options, such as pulling one of K arms of a bandit machine. In each round, we receive a reward that depends on the arm being selected. Without loss of generality, assume that every reward is bounded between $[0,1]$. The goal then is to minimize the cumulative regret compared to the best arm.

Popular algorithms for the stochastic setting include UCB (upper confidence bound) algorithms \cite{auer2002finite,bubeck2012regret}, and Thompson Sampling \cite{chapelle2011empirical,russo2014learning}.

In the adversarial setting, the rewards are chosen in an adversarial fashion, rather than sampled independently from some underlying distribution.  In this case, regret is rephrased as the difference in the sum of rewards. The predominant algorithm for the adversarial setting is EXP3 \cite{auer2002nonstochastic}. 



\subsection{Correlated Bandits}
The set of candidate actions is very large (or even infinite) in many applications. When that is the case, one must exploit dependencies between payoffs of different decisions in order to arrive at an efficient algorithm.

In some applications, the underlying problem comes equipped with a correlational structure. Various methods of introducing dependence include bandits on trees \cite{uctpaper}, bandits with linear correlations \cite{dani08,abernethy08,abbasi2011improved,gentile2014online} or Lipschitz continuous payoffs \cite{Kleinberg08,Bubeck08}, and Gaussian payoffs \cite{srinivas-icml}. 


\subsection{Dueling Bandits}
Dueling bandits problem \cite{yue2012k}, as a variant of the multi-armed bandits, takes (noisy) comparative feedback instead of real-valued feedback. It is under the general framework of preference learning (learning with preferential feedback). The dueling bandits problem can also be viewed as a special case of partial monitoring problems \cite{cesa2006regret}. Its problem setting naturally fits in with many applications such as information retrievals and recommender systems. The stochastic dueling bandits problem has been extensively studied in \cite{yue2012k,ailon2014reducing,zoghi2014relative,komiyama2015regret,wu2016double}.

Beyond the stochastic $K$-armed dueling bandits setting, other dueling bandit settings include multi-way preference feedback \cite{sui2014clinical}, continuous-armed convex dueling bandits \cite{yue2009interactively}, contextual dueling bandits which also introduces the von Neumann winner solution concept \cite{dudik2015contextual}, sparse dueling bandits that focus on the Borda winner solution concept \cite{jamieson2015sparse}, Copeland dueling bandits that focus on the Copeland winner solution concept \cite{zoghi2015copeland}, and adversarial dueling bandits \cite{gajane2015relative}. It would be interesting to study how to extend our analysis to these other settings as well.


\section{Problem Statement}
In the classical dueling bandits problem, at each iteration $t$, the following happens:
\begin{itemize}
\item The algorithm chooses a pair of actions $b^{(1)}(t)$ and $b^{(2)}(t)$ from a set of $K$ possible actions.
\item The algorithm duels $b^{(1)}(t)$ and $b^{(2)}(t)$ and receives (noisy) feedback corresponding to the winner. 
\end{itemize}

Our procedure can be described as follows.  There is a set of arms $\mathcal{B} = \{b_1,\cdots,b_K\}$, and a total number of $T$ tests to be performed. At each time step, a pair of arms are chosen from the set $\mathcal{B}$ and a (noisy) comparison of them is observed. $T$ is determined before we run the algorithm. The set of arms are correlated and $T \le |\mathcal{B}| = K$ in general.

We follow the original notation of the dueling bandit problem. For two arms $b_i$ and $b_j$ sampled from $\mathcal{B}$, we write the comparison factor as
    \[ \epsilon(b_i,b_j) = P(b_i\succ b_j)-1/2 ,\]
where $P(b_i\succ b_j)$ is the probability that $b_i$ dominates $b_j$ and $\epsilon(b_i,b_j)\in [-1/2,1/2]$ represents the priority between $b_i$ and $b_j$. We define $b_i\succ b_j \Leftrightarrow \epsilon(b_i,b_j)>0$. We use the notation $\epsilon_{i,j} \equiv \epsilon(b_i,b_j)$ for convenience. Note that $\epsilon(b_i,b_j) = -\epsilon(b_j,b_i)$ and $\epsilon(b_i,b_i)=0$. We assume the distribution of reward for each arm is stationary so that all comparison factors converge in [-1/2,1/2]. We also assume $w.l.o.g.$ that the arms are indexed in preferential order $b_1\succ b_2\succ \cdots \succ b_K$ so that there is one preferred arm.

Our goal is to minimize the total regret:
\[ R_T=\sum_{t=1}^T \epsilon(b_1,b^{(1)}(t)) + \epsilon(b_1,b^{(2)}(t))\]
The total regret $R_T=0$ if we constantly choose $b(t)=b_1$ during the experiment. $R_T=\Theta (T)$ is linear $w.r.t. \ T$ if we constantly choose $b(t) \in \mathcal{B}$.


We also inherit two properties of the comparison factors from the original dueling bandit problem:

{\bf Strong Stochastic Transitivity.} For any triplet of arms $b_i \succ b_j \succ b_k$, we assume $\epsilon_{i,k} \ge \max \{ \epsilon_{i,j}, \epsilon_{j,k}\}$.

{\bf Stochastic Triangle Inequality.} For any triplet of arms $b_i\succ b_j \succ b_k$, we assume $\epsilon_{i,k} \le \epsilon_{i,j}+\epsilon_{j,k}$. This can be viewed as a diminishing returns property.


\textbf{Correlational Dueling Bandits.} When the size of the decision set, $K$, is large, it is unavoidable to carry out a very large number of tests before the algorithm converges to its optimal solution. In some applications like our clinical example, each test is expensive and time-consuming. The number of tests -- the time horizon of an algorithm -- is often predetermined by clinical conditions. 
We thus augment the dueling bandits problem into correlational dueling bandits, which takes the correlations among arms into consideration. 
For any pair of arms $(b_i,b_j) \in \mathcal{B}^2$, we consider the dependence between them are captured by some similarity function $r_{ij} \in [0,1]$, and it satisfies:
\begin{itemize}
\item $r_{ij} = r_{ji}$;
\item $r_{ij} = 0 \iff b_i$ and $b_j$ are not correlated;
\item $r_{ij} = 1 \iff b_i = b_j$.
\end{itemize}

For all tuples $(b_i;b_j,b_k) \in \mathcal{B}^3$, if we play  pair $(b_i,b_j)$ once and observe $b_i \succ b_j$, we define $\kappa(b_k;b_i,b_j)$ to be the update of wins of $b_k$ and $\tau(b_k;b_i,b_j)$ to be the update of plays of $b_k$. 
$\kappa(\cdot;\cdot,\cdot)$ and $\tau(\cdot;\cdot,\cdot)$ represent the dependent structure of the tuple arms. They could be functions of $r_{ij}$, $r_{ik}$, and $r_{jk}$. 


In our synthetic experiments, we assume the input space (set of arms) $\mathcal{B}$ has dependent structure and there exists an underlying utility function $f(b): \mathcal{B} \rightarrow \mathbb{R}$ over input space which we cannot observe directly. Our observations are the noisy comparisons between pairs of arms (e.g., $b_i$ and $b_j$) which can be viewed as the noisy comparison of utility values (e.g., $f(b_i)$ and $f(b_j)$). The properties of strong stochastic transitivity, stochastic triangle inequality, and the dependency assumptions on $r_{ij}$ generally hold for a wide range of applications. In the clinical experiments, we extract comparisons from physician's online judgment.

\section{Algorithm}
Our algorithm, \cduel~as shown in Algorithm~\ref{alg:alg1}, is a correlational dueling bandits algorithm based on the Beat-the-Mean algorithm \cite{yue2011beat}. It uses observational feedback and the correlational structure to successively remove suboptimal arms, while keeping the optimal one(s) in the sample space with high probability. The inputs to \cduel~are the set of arms $\mathcal{B}$, the total number of iterations $T$, and the correlational structure $(\kappa,\tau)$. 

{\em Parameters-Initialization} (Algorithm \ref{alg:alg2}) defines the set of active arms $W_{\ell}$, whose size shrinks as more tests are completed. For each arm $b$, let $n_b$ be the total number of comparisons between $b$ and other arms, and let $w_b$ be the total number of wins against all other arms. Let
$\hat{P}_b$ be the empirical average of $P(b\succ b')$ for all $b'$ in $W_{\ell}$, and let $\hat{P}_{b,n}$ be the value of $\hat{P}_b$ after $n$ comparisons between arm $b$ and any other arms.  Set the confidence interval of $P(b\succ b')$ as: 
    \[ \hat{C}_{b,n}=(\hat{P}_{b,n}-c_{\delta}(n), \hat{P}_{b,n}+c_{\delta}(n)) \]
where $c_{\delta}(n)=\sqrt{(1/n) log(1/\delta)}$, and $\delta$ is the confidence that $P(b\succ b')$ lies in $\hat{C}_{b,n}$.  The function $c_{\delta}(n)$ decreases as the number of comparisons $n$ increases. By properly setting parameter $\delta$, the optimal reward can be reached within the fixed time horizon as shown in Proposition~\ref{prop:1}.

{\em Active-Elimination} (Algorithm \ref{alg:alg3}) is the key part of \cduel. For each pair of tests, two arms are randomly chosen from $W_{\ell}$. The randomized selection method enjoys low-variance total regret in general. For each arm $b$, the values of $w_b$, $n_b$ and $\hat{P_b}$ are updated, as is the corresponding confidence radius $c^*$.  An arm $b$ dominates another arm $b'$, if their confidence intervals do not overlap, and the inferior arm is eliminated from $W_{\ell}$. The algorithm runs until the time horizon $T$ is reached, or only one active arm remains.

\cupdate~(Algorithm~\ref{alg:alg4}) is the subroutine of {\em Active-Elimination} (Algorithm \ref{alg:alg3}) which updates the weights of $b_k$ by rules $\kappa(\cdot;\cdot,\cdot)$ and $\tau(\cdot;\cdot,\cdot)$.
In the classical dueling bandits setting, we assume arms are independent. For independent arms, if we have one comparison between $b_i$ and $b_j$ and gets $b_i \succ b_j$, 
we only update the weights for arm $b_i$ and $b_j$:
\begin{equation}
w_i \leftarrow w_i + 1, n_i \leftarrow n_i + 1    \label{eqn:1}
\end{equation}
\begin{equation}
w_j \leftarrow w_j, n_j \leftarrow n_j + 1    \label{eqn:2}
\end{equation}

For a large decision space, existing dueling bandits algorithms are extremely slow if one does not exploit dependencies among arms, even if they can achieve provably optimal cumulative regret (w.r.t. independent arms). When the arms are correlated and the correlation between any pair of arms $b_i$ and $b_j$ is measured properly by $r_{ij}$, we can update all active arms at each iteration.


As shown in Algorithm~\ref{alg:alg4}, we update every arm $b_k$ after comparing arms $b_i$ and $b_j$ ($w.l.o.g.$ assume $b_i \succ b_j$) via:
\begin{equation}
w_k \leftarrow w_k + \kappa(b_k;b_i,b_j)
\end{equation}
\begin{equation}
n_k \leftarrow n_k + \tau(b_k;b_i,b_j)
\end{equation}
where $\kappa(\cdot;\cdot,\cdot)$ and $\tau(\cdot;\cdot,\cdot)$ represent the correlational structure, which is assumed to satisfy:

\begin{itemize}
\item $0 \le \kappa(b_k;b_i,b_j) \le \tau(b_k;b_i,b_j) \le 1$;
\item if $b_k = b_i$, $\kappa(b_k;b_i,b_j) = \tau(b_k;b_i,b_j) = 1$;
\item if $b_k = b_j$, $\kappa(b_k;b_i,b_j) = 0$, $\tau(b_k;b_i,b_j) = 1$.
\end{itemize}
These updates are based on the assumption that $\kappa(\cdot;\cdot,\cdot)$ $\tau(\cdot;\cdot,\cdot)$ is an unbiased estimation of the dependent structure. The \cupdate~subroutine (Algorithm~\ref{alg:alg4}) can efficiently update all arms at each iteration. So \cduel~enjoys fast convergence towards the near optimal arms.

\begin{algorithm}[t]
   \caption{\cduel}
   \label{alg:alg1}
\begin{algorithmic}[1]
   \STATE {\bfseries Input:} $\mathcal{B}$, $T$, $(\kappa,\tau)$
   \STATE {\bfseries Input:} $c_{\delta}(n)=\sqrt{(1/n)log(1/\delta)}$ 
   \STATE {\bfseries Run:} [Parameters-Initialization]
   \STATE {\bfseries Run:} [Active-Elimination]
   \STATE {\bfseries return} $b^* \ \ \ \ //\ Optimal\ arm$
\end{algorithmic}
\end{algorithm}
\begin{algorithm}[t]
   \caption{Parameters-Initialization}
   \label{alg:alg2}
\begin{algorithmic}[1]
   \STATE $W_1\gets \mathcal{B} \ \ \ \ // \ set\  of\  active\  arms$
   \STATE $\ell \gets 1 \ \ \ \ // \ rounds$
   \STATE $\forall b \in W_{\ell}, \ n_b \gets 0 \ \ \ \ // \ comparisons$
   \STATE $\forall b \in W_{\ell}, \ w_b \gets 0 \ \ \ \ //\ priorities$
   \STATE $\forall b \in W_{\ell}, \ \hat{P}_b\equiv w_b/n_b$, or 1/2 if $n_b=0$
   \STATE $n^* \equiv min_{b\in W_{\ell}} n_b$
   \STATE $c^* \equiv c_{\delta}(n^*)$, or  1 if $n^* = 0 \ \ \ \  // \ confidence\ radius$
   \STATE $t \gets 0 \ \ \ \ //\ total\ number\ of\ iterations$
   \STATE {\bfseries return} all new parameters
\end{algorithmic}
\end{algorithm}
\begin{algorithm}[t]
   \caption{Active-Elimination}
   \label{alg:alg3}
\begin{algorithmic}[1]
   \WHILE {$|W_{\ell}|>1$ and $t\leq T$}
	\STATE select $b_i, b_j \in W_{\ell}$ at random
	\STATE compare selected arms (assume $b_i \succ b_j$)
	    \FOR {all $b_k \in W_{\ell}$}
			\STATE update $w_k$, $n_k$ by \cupdate
	    \ENDFOR \label{lin:update2}
		\IF {$\min_{b'\in W_{\ell}} \hat{P}_{b'}+c^* \le \max_{b\in W_{\ell}} \hat{P}_b-c^*$}	\label{lin:elim}
			\STATE $b' \gets \arg\min_{b\in W_{\ell}} \hat{P}_b$
			\STATE $\forall b\in W_{\ell}$, delete comparisons with $b'$ from $w_b$, $n_b$
			\STATE $W_{\ell+1} \gets W_{\ell} \backslash \{b'\} \ \ \ \ //\ update\ working\ set$
			\STATE $\ell \gets \ell+1 \ \ \ \ //\ new \  round$
		\ENDIF
   \ENDWHILE 
   \STATE {\bfseries return} {$b^* = \arg \max_{b\in W_{\ell}} \hat{P}_b$}
\end{algorithmic}
\end{algorithm}

\begin{algorithm}[t]
	\caption{\cupdate}
    \label{alg:alg4}
\begin{algorithmic}[1]
	\STATE {\bfseries Input:} $b_k, b_i \succ b_j$
    \STATE $w_k \leftarrow w_k + \kappa(b_k;b_i,b_j)$
	\STATE $n_k \leftarrow n_k + \tau(b_k;b_i,b_j)$
    \STATE {\bfseries return} $w_k$, $n_k$
\end{algorithmic}
\end{algorithm}

\begin{definition}
	\label{def:1}
    $\varepsilon$-optimal arm. If arm $b$ satisfies $\epsilon(b_1,b) \le \varepsilon$, then $b$ is an $\varepsilon$-optimal arm.
\end{definition}

\begin{proposition}
	\label{prop:1}
If~$\exists \mu > 0$ such that $\tau(b_k;b_i,b_j) \ge \mu$ for every  $(b_i,b_j,b_k) \in \mathcal{B}^3$, then with high probability, the cumulative time to achieve purely $\varepsilon$-optimal arms $T(\varepsilon)$ is bounded by:
$$T(\varepsilon) = \mathcal{O}\left( \frac{1}{\mu \varepsilon^2} \log \frac{1}{\delta} \right) .$$
\end{proposition}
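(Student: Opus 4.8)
The plan is to follow the Beat-the-Mean template of \cite{yue2011beat}, exploiting the one new ingredient: the lower bound $\tau(\cdot;\cdot,\cdot)\ge\mu$ makes \emph{every} active arm's play-count grow by at least $\mu$ per iteration, instead of only the two dueling arms' counts growing by $1$. I read ``the cumulative time to achieve purely $\varepsilon$-optimal arms'' as: after $T(\varepsilon)$ iterations of Active-Elimination the active set $W_\ell$ contains only $\varepsilon$-optimal arms (so in particular $b_1\in W_\ell$). First I would fix the good event. For any active arm $b$ with current play-count $n$, the score $\hat P_{b,n}$ is a $\tau$-weighted average of virtual comparison outcomes whose common mean, under the unbiasedness assumption on $(\kappa,\tau)$, is $P_b\equiv\frac{1}{|W_\ell|}\sum_{b'\in W_\ell}P(b\succ b')$. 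A weighted Hoeffding bound then gives $\Pr[\,|\hat P_{b,n}-P_b|\ge c_\delta(n)\,]\le 2\delta^2$, since $n\,c_\delta(n)^2=\log(1/\delta)$. Union bounding over the $\le K$ arms and $\le T$ values of $n$ (rescaling $\delta$ if one wants $2KT\delta^2$ negligible) defines an event $\mathcal{E}$ on which $P_b\in\hat C_{b,n}$ always; condition on $\mathcal{E}$.

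Next, two structural facts. (i) $b_1$ is never eliminated: strong stochastic transitivity makes each summand $\epsilon(b_1,b')-\epsilon(b,b')$ nonnegative, so $P_{b_1}=\max_{b\in W_\ell}P_b$, and on $\mathcal{E}$ this forces $\hat P_{b_1}+c^*\ge P_{b_1}\ge P_b\ge\hat P_b-c^*$ for all active $b$, so $b_1$ can never be the separated-away minimizer. (ii) Once $c^*$ is small, a non-$\varepsilon$-optimal arm must be deleted. Suppose $W_\ell$ still holds some arm $b$ that is not $\varepsilon$-optimal, and let $b^\circ$ be the least preferred active arm; strong stochastic transitivity gives $\epsilon(b_1,b^\circ)\ge\epsilon(b_1,b)>\varepsilon$. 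For every active $b'$ we have $b_1\succeq b'\succeq b^\circ$, so the stochastic triangle inequality gives $\epsilon(b_1,b')-\epsilon(b^\circ,b')=\epsilon(b_1,b')+\epsilon(b',b^\circ)\ge\epsilon(b_1,b^\circ)$ (the two boundary cases $b'\in\{b_1,b^\circ\}$ being immediate); averaging over the $|W_\ell|$ active arms, $P_{b_1}-P_{b^\circ}\ge\epsilon(b_1,b^\circ)>\varepsilon$. Crucially the averaging does not dilute this gap, precisely because $b^\circ$ sits below every active arm. Hence on $\mathcal{E}$, $\max_b\hat P_b-\min_b\hat P_b\ge\hat P_{b_1}-\hat P_{b^\circ}\ge(P_{b_1}-P_{b^\circ})-2c^*>\varepsilon-2c^*$, which is $\ge 2c^*$ as soon as $c^*\le\varepsilon/4$, so the elimination test fires and $|W_\ell|$ strictly decreases.

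Chaining these: each iteration adds at least $\mu$ to every active count, so between consecutive eliminations, after $t$ iterations $n^*\ge\mu t$ and $c^*=c_\delta(n^*)\le\sqrt{\log(1/\delta)/(\mu t)}$; once $t\ge 16\,\mu^{-1}\varepsilon^{-2}\log(1/\delta)$ we have $c^*\le\varepsilon/4$, so by fact (ii) $W_\ell$ can no longer contain a non-$\varepsilon$-optimal arm, giving $T(\varepsilon)=\mathcal{O}\!\big(\mu^{-1}\varepsilon^{-2}\log(1/\delta)\big)$. The main obstacle is the only delicate point here: the deletion performed when an arm is removed (``delete comparisons with $b'$ from $w_b,n_b$'') lowers the $n_b$'s and can re-inflate $c^*$. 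One must argue this costs at most a constant factor — e.g.\ that deletions discard only the iterations in which $b'$ was the randomly chosen arm (a $\Theta(1/|W_\ell|)$ fraction), so $n^*$ stays a constant fraction of $\mu\times(\text{iterations since the last elimination})$ while $|W_\ell|$ is large, and the $\le K$ phases with small $|W_\ell|$ are driven by gaps $\epsilon(b_1,b^\circ)$ that are $>\varepsilon$ and strictly decreasing, so the overall count is dominated by the final phase. A secondary technical point is making the weighted Hoeffding step rigorous given that $(\kappa,\tau)$ are assumed only to be unbiased estimators of the dependency structure.
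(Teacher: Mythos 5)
Your proposal is correct and follows essentially the same route as the paper's proof: the paper likewise leans on the Beat-the-Mean analysis (Theorem~1 of \cite{yue2011beat}), uses $\tau\ge\mu$ to get $n^*\ge\mu t$ and hence $c^*\le\sqrt{\log(1/\delta)/(\mu t)}$, and concludes via the elimination criterion that a non-$\varepsilon$-optimal arm cannot survive past $t=\mathcal{O}\bigl(\tfrac{1}{\mu\varepsilon^2}\log\tfrac{1}{\delta}\bigr)$. You simply fill in more of the concentration and gap arguments (and explicitly flag the comparison-deletion and unbiasedness subtleties, which the paper's proof passes over by invoking a fixed concentration parameter $C_\delta$ and the prior result), so no substantive divergence.
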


\begin{proof}
Proposition~\ref{prop:1} holds based on the Theorem 1 of \cite{yue2011beat}.
After $t$ iterations, since $\tau(b_k;b_i,b_j) \ge \mu$, we have $n^* \ge \mu t$. Then $c^* = c_{\delta}(n^*)=\sqrt{(1/n^*)log(1/\delta)} \le \sqrt{(1/\mu t)log(1/\delta)}$. Notice $c^*$ is a function of time step $t$.

For any arm $b$ which is not $\varepsilon$-optimal (satisfies $\epsilon(b_1,b) > \varepsilon$), with probability $1-\delta$, $\hat{P}_{b_1} - \hat{P}_b > \varepsilon C_{\delta}$ holds for some fixed concentration parameter $C_{\delta}$. Suppose arm $b$ has not been eliminated at iteration $t$. Then from elimination criterion Line~\ref{lin:elim} of Algorithm~\ref{alg:alg3} we have $\varepsilon C_{\delta} < \hat{P}_{b_1} - \hat{P}_b < 2c^* \le 2\sqrt{(1/\mu t)log(1/\delta)}$. The inequality breaks when $t \ge \frac{4}{\mu \varepsilon^2 C_{\delta}^2} \log \frac{1}{\delta} = \mathcal{O}\left( \frac{1}{\mu \varepsilon^2} \log \frac{1}{\delta} \right)$.
\end{proof}

Notice, the iteration time $T(\varepsilon)$ in Propositions~\ref{prop:1} does not depend on $|\mathcal{B}|=K$, which suggests the fast convergence of \cduel~in large decision spaces.

\begin{figure}[h]
\centering
\includegraphics[width=0.5\textwidth]{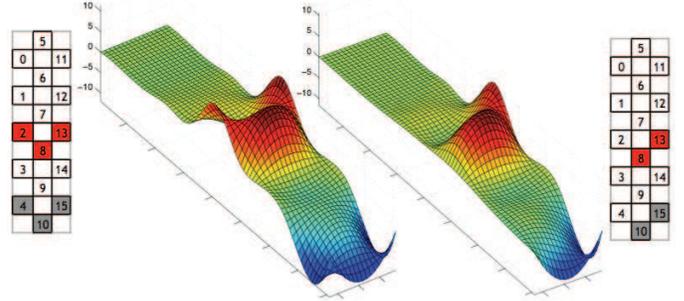}
\caption{Depiction of the correlations between two different multi-electrode stimulating configurations.}
\label{fig:corr}
\end{figure}

In our application of \cduel~to selection of optimal multi-electrode stimulating parameters for paraplegic, we define the similarity of different configurations to be the correlation coefficient of electrical potential fields generated by the two different electrode stimulating configurations. Since the correlation coefficient function $r(\cdot, \cdot)$ has support on $[-1,1]$, we only update with the \cupdate~rule when $r(\cdot, \cdot) \ge 0$. The existence of negative $r$ values is based on clinical observations. The correlational property arises from analysis of electric fields applied by the array as shown in Figure~\ref{fig:corr}. 

The standard notion of correlation coefficient, $r_{XY}=E[XY-E[X]E[Y]]/\sqrt{Var[X]Var[Y]}$, is used in our experiments. However, one can use any measure as a basis for $r_{XY}$ as long as $r_{XY}\in[0,1]$, $r_{XY}=1$ when $X=Y$, and $r_{XY}=0$ when $X$ has an ``irrelevant'' relation to $Y$. The coefficient $r$ can take negative values, but the algorithm doesn't use negative values  for its updates.


For correlated arms, we perform an update for every arm $k$ for which $r_{ik}, r_{jk} > 0$ as follows:
\begin{equation}
\kappa(b_k;b_i,b_j) \leftarrow \frac{\log r_{jk}}{\log r_{ik} + \log r_{jk}} \cdot \frac{r_{ik} + r_{jk}}{1 + r_{ij}} \label{eqn:update3}
\end{equation}
\begin{equation}
\tau(b_k;b_i,b_j) \leftarrow \frac{r_{ik} + r_{jk}}{1 + r_{ij}}
\end{equation}

\begin{proposition}
	\label{prop:2}
If $\exists \mu > 0$ such that $r_{ij} \ge \mu$ for every pair $(b_i,b_j) \in \mathcal{B}^2$, then with high probability, the cumulative time to achieve purely $\varepsilon$-optimal arms $T(\varepsilon)$ satisfies:
$$T(\varepsilon) = \mathcal{O}\left( \frac{1}{\mu \varepsilon^2} \log \frac{1}{\delta} \right) .$$
\end{proposition}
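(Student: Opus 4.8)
The plan is to obtain Proposition~\ref{prop:2} as a direct corollary of Proposition~\ref{prop:1}. The guiding observation is that the hypothesis ``$r_{ij} \ge \mu$ for every pair $(b_i,b_j)\in\mathcal{B}^2$'' forces a uniform lower bound on the play-increment function $\tau$ produced by the explicit correlational update rule, and a uniform lower bound on $\tau$ is exactly the input that Proposition~\ref{prop:1} requires.

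First I would check that, under this hypothesis, the \cupdate~subroutine is actually invoked for every active arm: since $r_{ij}\ge\mu>0$ for all pairs, in particular $r_{ik}>0$ and $r_{jk}>0$ for every $b_k$, so no arm is skipped and every triple $(b_i,b_j,b_k)\in\mathcal{B}^3$ receives the update $\tau(b_k;b_i,b_j)=(r_{ik}+r_{jk})/(1+r_{ij})$. The key step is then a one-line estimate: bound the numerator from below using $r_{ik},r_{jk}\ge\mu$ and the denominator from above using $r_{ij}\le 1$, which gives $\tau(b_k;b_i,b_j)\ge 2\mu/2=\mu$ uniformly over $\mathcal{B}^3$. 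With this uniform bound $\tau\ge\mu$ established, Proposition~\ref{prop:1} applies verbatim with the same constant $\mu$, so $n^*\ge\mu t$ after $t$ iterations, $c^*\le\sqrt{(1/\mu t)\log(1/\delta)}$, and the same elimination argument yields $T(\varepsilon)=\mathcal{O}\!\left(\tfrac{1}{\mu\varepsilon^2}\log\tfrac{1}{\delta}\right)$.

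The part that needs the most care is not the lower-bound computation but verifying that the explicit correlational rule in~(\ref{eqn:update3}) and its companion for $\tau$ still meets the structural requirements assumed in Section~4 — namely $0\le\kappa(b_k;b_i,b_j)\le\tau(b_k;b_i,b_j)\le 1$, together with the boundary behavior at $b_k=b_i$ and $b_k=b_j$ — since these are what allow the concentration/unbiasedness argument inherited from Beat-the-Mean \cite{yue2011beat} (and used inside Proposition~\ref{prop:1}) to go through. The inequality $\kappa\le\tau$ reduces to $\log r_{jk}\le\log r_{ik}+\log r_{jk}$, which holds because $\log r_{ik}\le 0$; the bound $\tau\le 1$ amounts to $r_{ik}+r_{jk}\le 1+r_{ij}$, and this is the one place where one must lean on properties of the similarity function $r$ (for a genuine correlation coefficient it follows from positive semidefiniteness of the $3\times3$ correlation submatrix, which is what I would cite or state as a standing assumption). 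Once these structural facts are in place, the reduction to Proposition~\ref{prop:1} is immediate and the stated bound follows.
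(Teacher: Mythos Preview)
Your proposal is correct and matches the paper's proof essentially verbatim: bound $\tau(b_k;b_i,b_j)=(r_{ik}+r_{jk})/(1+r_{ij})\ge 2\mu/2=\mu$ using $r_{ik},r_{jk}\ge\mu$ and $r_{ij}\le 1$, then invoke Proposition~\ref{prop:1}. The additional checks you propose on the structural constraints $0\le\kappa\le\tau\le1$ go beyond what the paper's two-line proof actually writes down, but they are reasonable due diligence and do not change the approach.
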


\begin{proof}
If $r_{ij} \ge \mu$ for every pair $(b_i,b_j) \in \mathcal{B}^2$, since $r_{ij} \le 1$, $\tau(b_k;b_i,b_j) = \frac{r_{ik} + r_{jk}}{1 + r_{ij}} \ge \frac{2\mu}{2} = \mu$ for every tuple $(b_i,b_j,b_k)$. The result follows from substituting it into Proposition~\ref{prop:1}.
\end{proof}

The \cupdate~subroutine above updates the dueling pair $b_i, b_j$ in the same way as if they are independent since (5) and (6) will collapse to (1) and (2) for $b_i$ and $b_j$.
For extreme cases, if $b_i \succ b_j$ and arm $b_k$ is very close to $b_j$. We have $r_{ik} \simeq 1$ and $r_{jk} \simeq r_{ij}$, the updating rules for arm $b_k$ will be close to the updates of arm $b_j$. If $b_k$ is far from both $b_i$ and $b_j$, (5) and (6) guarantees that the update for $b_k$ is very small since we acquire little information about $b_k$ from far away comparisons. Also, if $b_i$ and $b_j$ are less dependent (with smaller $r_{ij}$), we would expect to acquire larger updates for the points in between. 

One can also consider a Bayesian version, e.g., by using Gaussian processes. In this paper, we focus on a frequentist approach, which is a better model of the clinical application.


\section{Experiments}
We evaluated our approach in two settings, synthetic simulations and a real clinical application of online optimization for spinal cord stimulation therapy.
In our controlled synthetic experiments, we seek to address the following questions: 
\begin{itemize}
\item How does the algorithm compare against standard dueling bandit algorithms? 
\item How effective is it in terms of convergence? 
\end{itemize}
We compare the algorithm against Beat-the-Mean, RUCB,  and Sparring algorithm with UCB1. These three algorithms are the representative dueling bandits algorithms designed for independent arms, which do not, however, leverage the correlations between arms.

\begin{figure}[t]
\centering
\includegraphics[width=0.35\textwidth]{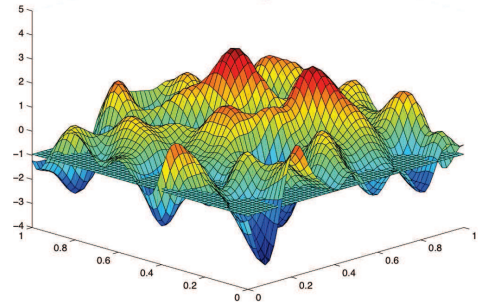}
\caption{Mean function sampled from a Gaussian process.}
\label{fig:syn1}
\end{figure}

\subsection{Simulation Experiments}

\textbf{Setup.} We first evaluate the algorithm with simulation experiments. The purpose of this experiment is to validate our algorithm, and demonstrate its quick convergence when the arms are dependent. 
To generate the underlying utility function over correlated arms, we sampled random functions from a zero-mean Gaussian Process with squared exponential kernel over the sample space $\mathcal{B} = [0,1] \times [0,1]$, uniformly discretized into $50 \times 50$ points (set of arms) and used this function as the mean function for the $2500$ arms. We chose $\sigma = 0.5$ as the standard deviation of arms. One evaluation of the mean functions is shown in Figure \ref{fig:syn1}. The utility function is not necessarily convex or simple. Within each iteration, we sample 2 points in the active set and compare their (noisy) sampling values to get the $\{0,1\}$ feedback of the duel. We run the duel for $T = 100$ iterations for 10000 trials for each of the 4 comparing algorithms.

\textbf{Results.} We report a notion of {\em regret} as the stepwise regret instead of the cumulative regret. It converges to zero as iteration number goes to infinity for every no-regret algorithm. As seen in Figure \ref{fig:syn2}, \cduel~converges much faster than the other three algorithms since it takes the advantage of the dependent arms. The independent-armed dueling bandits algorithms require an exhaustive searching period which is significantly larger than the time horizon we use here before concentrating on the (near) optimal arms.

\begin{figure}[t]
\centering
\includegraphics[width=0.45\textwidth]{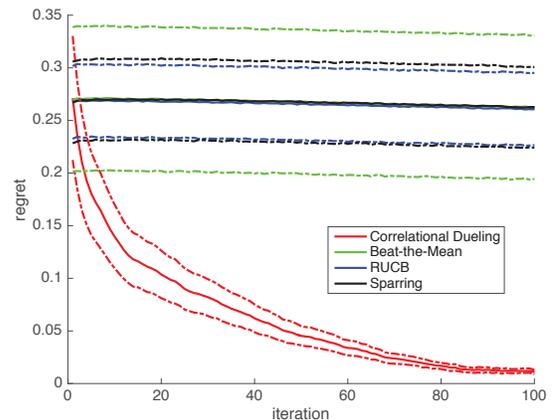}
\caption{Regret versus iteration. The dashed lines represent one standard deviation.}
\label{fig:syn2}
\end{figure}

\subsection{Human Experiments}

\textbf{Background.} As depicted in  Figure~\ref{fig:exp} from before, our human clinical experiments involve optimizing a system for stand training under spinal cord stimulation with spinal cord injury patients.  The subject practices standing under spinal stimulation using a stand frame for assistant in balance. The training processes largely follow the procedures in \cite{rejc2015effects}. Two trainers on the subject's left and right protect and assist the subject. Within each experiment, a specific stimulating pattern (a combination of active electrode selections, the polarity of the actively selected electrodes, and the stimulation amplitude and frequency) is applied through the implanted electrode array and its controlling circuitry. An anonymous short video\footnote{\url{https://youtu.be/loJLtbcUBDM}} shows the standing quality under different stimuli. The first part shows a low quality bipedal standing and the second part shows a better standing, both with electrical spinal cord stimulation. Different standings could look similar for the non-specialist.

The participants are under stable medical condition and have no musculoskeletal dysfunction that might interfere with stand training. They have no motor response present in leg muscles during transcranial magnetic stimulation, indicating that there are no strongly active neural pathways connecting cortex and lower limb muscles. No volitional control can be achieved during voluntary movement attempts in leg muscles as measured by EMG activity. 

\textbf{Setup.} We  use clinical knowledge to restrict the decision space from around $4.3 \times 10^7$ to be on the order of $10^3 \sim 10^4$. It is still a very large decision space considering the number of trials, or arm pulls, are on the order of $10^2$.



A total of 414 experimental comparisons were done with two patients under the \cduel~algorithm. Each trial lasted for about 5 minutes. Within each trial, one stimulating pattern was generated by the 16-channel electrode. The patterns were unchanged within each trial. For a fixed electrode configuration, the stimulation frequency and amplitude were modulated synergistically in order to find the best values for effective weight-bearing standing. We optimized the electrode patterns with \cduel~and performed exhaustive search for stimulation frequency and amplitude over a narrow range. 


Stimulation began while the patient was seated. Then the participant initiated the sit to stand transition by positioning his feet shoulder width apart and shifting his weight forward to begin loading the legs. 

\textbf{Results.} For the clinical experiments, we cannot create a direct plot for regrets since the ground truth optimal stimulation is unknown. In the experiments, we observed the convergence of \cduel, which is not possible for independent-armed dueling bandits algorithms. The set of (near) optimal configurations found by \cduel~is shown in Figure \ref{fig:human}. We compared the performance of \cduel~to the optimal selections found heuristically for each patient by clinicians, which are shown in Figure \ref{fig:human2}. We found that the manual selection is a subset of the algorithm's selection, and there exist high performing configurations (e.g., the 2nd in Figure \ref{fig:human}) found by the algorithm which are not in the manual selection. This shows that \cduel~is performing no worse than specialized physicians.

\begin{figure}[t]
\centering
\includegraphics[width=0.4\textwidth]{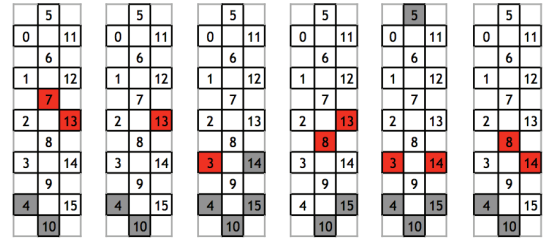}
\caption{The set of (near) optimal configurations found by the algorithm for a specific patient (in decreasing order in terms of performances).}
\label{fig:human}
\end{figure}

\begin{figure}[t]
\centering
\includegraphics[width=0.2\textwidth]{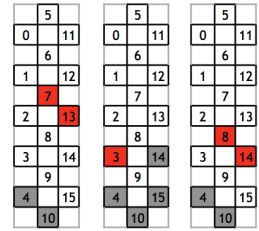}
\caption{The set of (near) optimal configurations found by physician's manual pick for that specific patient (in decreasing order in terms of performances).}
\label{fig:human2}
\end{figure}

\section{Conclusion and Discussion}

Our analysis and simulation demonstrate that \cduel~indeed exhibits fast convergence properties compared to independent-armed dueling bandits algorithms when correlation information is available. We deployed this algorithm in clinical experiments for the control of spinal cord stimulation and showed that \cduel~performs no worse than specialized physicians. We believe that our result provides an important step towards employing machine learning algorithms in many problems with a large volume of parameter selection and sequential decision making. These problems could be facilitated by our algorithm, which simultaneously delivers effective decisions and explores the decision space based on comparative feedback.

The \cupdate~subroutine is easy to incorporate with Beat-the-Mean algorithm to achieve efficient \cduel. Although we developed \cduel~specifically based on Beat-the-Mean, \cupdate~is a more general approach which has potential to incorporate with the existing dueling bandits algorithms. For instance, it can incorporate with RUCB to realize a variant of RUCB for dependent arms by updating the wins $w_{ij}$ with \cupdate.

To our knowledge, our work is the first to apply an algorithmic approach towards spinal cord injury treatments. The algorithm could find a proper set of optimal stimulating configurations within the test time horizon. 
We achieved good performance in both simulations and human experiments. The paraplegic human patients could achieve full-weight standing under the stimulation provided by our algorithm.


\section*{Acknowledgements}
This research was supported in part by Caltech/JPL PDF IAMS100224, NIH-U01-EB007615-08, NIH-U01-EB015521-05, and a gift from Northrop Grumman.

\small
\bibliographystyle{named}
\bibliography{reference}

\end{document}